\def\BibTeX{{\rm B\kern-.05em{\sc i\kern-.025em b}\kern-.08em
    T\kern-.1667em\lower.7ex\hbox{E}\kern-.125emX}}
\colorlet{shadecolor}{yellow}
\newcommand{\ainvase}{Assorted}
\newcommand{\our}{assorted}
\newcommand{\ourU}{U_{\textit{ASE}}}
\newtheorem{lemma}{Lemma}
\begin{document}

\title{VASE: Variational \ainvase{} Surprise Exploration for Reinforcement Learning\\
%{\footnotesize \textsuperscript{*}Note: Sub-titles are not captured in Xplore and
%should not be used}
%\thanks{Identify applicable funding agency here. If none, delete this.}
}

\author{\IEEEauthorblockN{ Haitao Xu, Brendan McCane, Lech Szymanski}
\IEEEauthorblockA{\textit{Department of Computer Science} \\
\textit{University of Otago}\\
Dunedin, New Zealand \\
\{haitao,mccane,lechszym\}@cs.otago.ac.nz}

}

\maketitle

\begin{abstract}
Exploration in environments with continuous control and sparse rewards remains a key challenge in reinforcement learning (RL). Recently, surprise has been used as an intrinsic reward that encourages systematic and efficient exploration.  We introduce a new definition of surprise and its RL implementation named Variational \ainvase{} Surprise Exploration (VASE).  VASE uses a Bayesian neural network as a model of the environment dynamics and is trained using variational inference, alternately updating the accuracy of the agent's model and policy.  Our experiments show that in continuous control sparse reward environments VASE outperforms other surprise-based exploration techniques. 
\end{abstract}

\begin{IEEEkeywords}
surprise, reinforcement learning, exploration, variational inference, Bayesian neural network 
\end{IEEEkeywords}

\section{Introduction}

Reinforcement learning (RL) trains agents to act in an environment so as to maximise cumulative reward.  The resulting behaviour is highly dependent on the trade-off between exploration and exploitation.  During training, the more the agent departs from its current policy, the more it learns about the environment, which may lead it to a better policy; the closer it adheres to the current policy, the less time wasted exploring less effective options.  How much and where to explore has an immense impact on the training and ultimately on what the agent learns.  Designing exploration strategies, especially for increasingly complex environments, is still a significant challenge.  

A common approach to exploration strategies is to rely on heuristics that introduce random perturbations into the choices of actions during training, such as $\epsilon$--greedy \cite{sutton1998introduction} or Boltzmann exploration \cite{mnih2015human}.  These methods instruct the agent to occasionally take an arbitrary action that may drive it into a new experience.  Another way is through the addition of noise to the parameter space of the agent's policy neural network \cite{fortunato2017noisy, plappert2017parameter}, which varies the policy itself to a similar random exploration net effect.  These strategies can be highly inefficient because they are a result of random behaviour, which is especially problematic in high dimension state-action spaces (common in discretised continuous state-action space environments) because of the curse of dimensionality.   Random exploration is also extremely inefficient in environments with sparse rewards, where the agent ends up wandering aimlessly through the state-space, learning nothing until (by sheer luck) it chances upon a reward.  Not surprisingly, more methodical approaches were devised,  which provide the agent with intrinsic rewards that encourage efficient exploration.  These intrinsic rewards are derived from computations related to the notion of curiosity \cite{pathak2017curiosity, burda2018large} or surprise \cite{houthooft2016vime, achiam2017surprise}. 

In this paper, we propose a new definition of surprise, which drives our agents' intrinsic reward function. To compute and use this surprise for guiding exploration, we propose an algorithm called VASE (Variational \ainvase{} Surprise Exploration) in a model-based RL framework (see Figure \ref{fig:rl}).  VASE alternates the update step between the agent's policy and its model of the environment dynamics.  The policy is implemented with a multilayer feed-forward (MLFF) neural network and the dynamics model with a Bayesian neural network (BNN~\cite{graves2011practical, blundell2015weight, hinton1993keeping}).  We evaluate the performance of our method against other surprise-driven methods on continuous control tasks with sparse rewards. Our experimental results show VASE's superior performance. 

\begin{figure}
    \centering
    \subfigure[Model-free RL]
    {
        \includegraphics[width=0.4\linewidth]{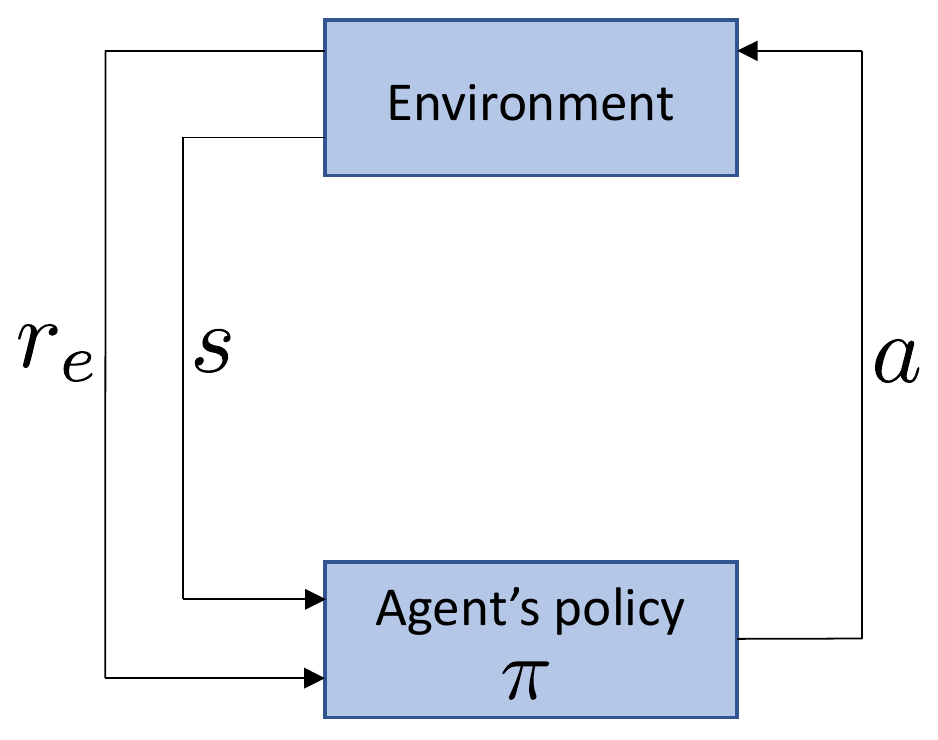}
        \label{fig:tr-rl}
    }
        \subfigure[Model-based/surprise-driven RL]
    {
        \includegraphics[width=0.45\linewidth]{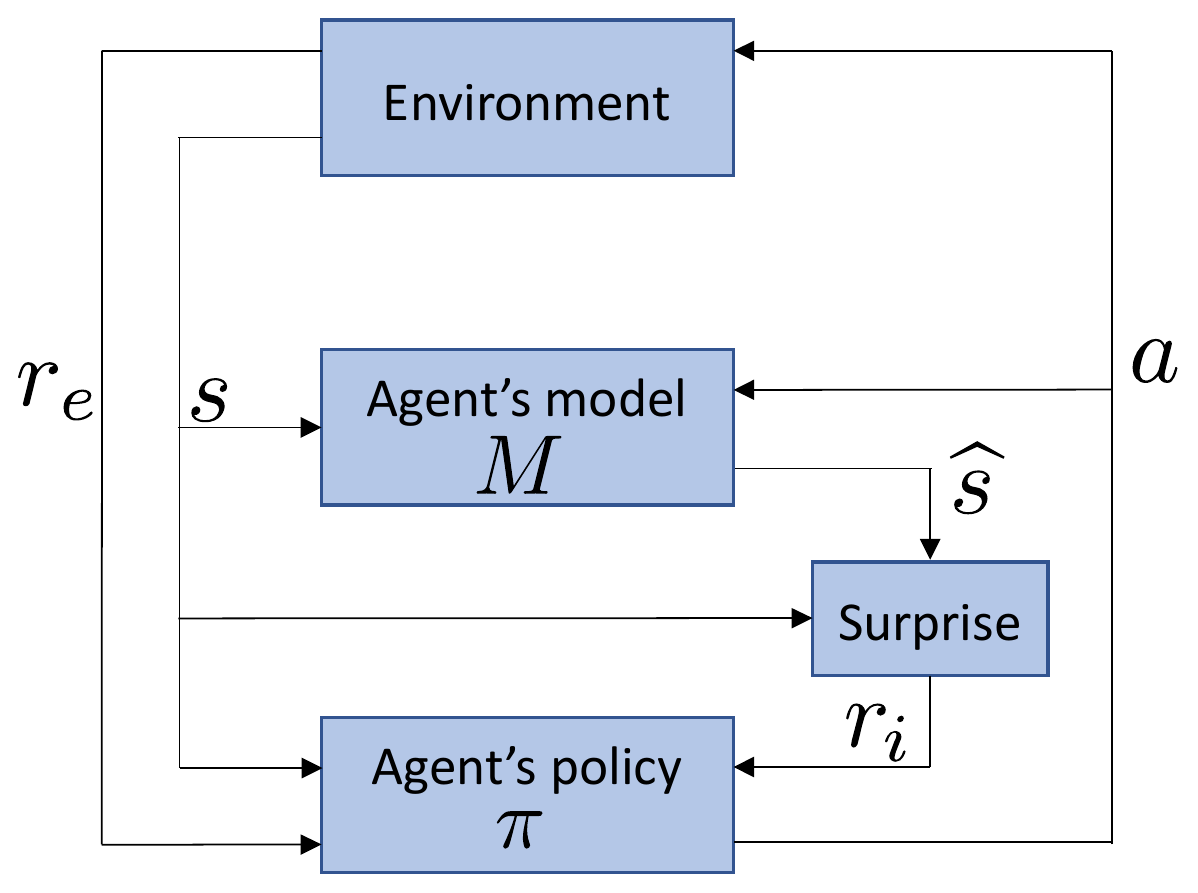}
        \label{fig:sur-rl}
    }
    \caption{Model-free RL vs. model-based/surprise-driven RL with $s$ for state, $r_e$ for the extrinsic (environment-driven) reward, $a$ for action, $r_i$ for intrinsic (agent-driven) reward, $\pi$ for the policy, and $M$ for the model of the environment that makes a prediction of the next state $\hat{s}$.}
    \label{fig:rl}
\end{figure}

\section{Related work}

In RL a finite horizon discounted Markov decision process (MDP) is defined by a tuple $(\mathcal{S, A, P}, r, \gamma, T,\rho_0)$ where: $\mathcal{S}$ is a state set, $\mathcal{A}$ an action set, $\mathcal{P} : \mathcal{S \times A \times S} \rightarrow [0,1]$ is a transition probability distribution, $r:\mathcal{S} \rightarrow \mathbb{R}$ a reward function,  $\gamma \in (0, 1]$ a discount factor, $T$ the horizon and $\rho_0 $ an initial state distribution. A policy $\pi : \mathcal{S \times A}\rightarrow [0, 1]$ gives the probability with $\pi(a|s)$ of taking action $a$ in state $s$. Let $\tau = (s_0 , a_0 , \cdots)$ denote the whole trajectory, $s_0 \sim \rho_0$, $a_t \sim \pi(a_t |s_t )$.  Our purpose is to find a policy $\pi$, modelled by $f_\pi(s_t,\psi)$ with parameters $\psi$, which maximises the expected discounted total return $\mathbb{E}_{\tau}[\sum_{t=0}^{T}\gamma^tr(s_t)]$ -- a discounted sum of all rewards in a fixed horizon $T$.  
\subsection{Intrinsic rewards}

Intrinsic motivation is essential for effective exploration when training the agent in an environment with sparse extrinsic rewards, or no rewards at all.  The overall reward signal is computed as follows:
 $$r(s_t) = r_e(s_t) + r_i(s_t), $$
 where $r_e(s_t)$ represents the extrinsic reward from the environment, $r_i(s_t)$ represents the intrinsic reward computed by the agent.  Even when $r_e(s_t)=0$, the intrinsic reward contributes to the cumulative reward, thus driving the learning process until non-zero extrinsic rewards are found.  There are two broad approaches for encouraging the agent to explore through intrinsic rewards.    
 
The first is the count-based approach \cite{bellemare2016unifying, ostrovski2017count, lopes2012exploration, poupart2006analytic}, which maintains visit counters over all the states.  The intrinsic reward is inversely proportional to the current state's counter, thus rewarding exploration of less frequented states. This approach becomes intractable with the increase of possible states, making it unfit for scenarios that have continuous state-action spaces. The second is surprise-based.

\subsection{Surprise-driven reinforcement learning}
\label{sec-surprise}

In model-based RL the agent maintains one or more models that predict the next state based on the current state and the action about to be taken. There are four possible regimes involving increasing amounts of probabilistic reasoning:
\begin{enumerate}
\item There is a single deterministic model
\item There is one model that produces a distribution over new states
\item There is a distribution of models, each of which is deterministic
\item There is a distribution of models, each of which produces a distribution over states.
\end{enumerate}

As an example, the model for case 1 could be a traditional neural network, case 2 variational auto-encoder, case 3 Bayesian neural network and case 4 Bayesian variational auto-encoder.  In cases 2-3, the outcome is a distribution over states and we are free to choose the most convenient formalism. For this paper we choose case 3 as in \cite{baldi2010bits}. In this case the agent maintains a distribution $P(M)$ over models or hypotheses $M \in \mathcal{M}$, where $M: \mathcal{S \times A} \rightarrow \mathcal{S}$ predicts state $s_{t+1}$, given state $s_t$ and action $a_t$. Furthermore, we have:
\begin{equation}
\label{eq-st1-withM}
  P(s_{t+1} | s_t, a_t, M) = \begin{cases}
    P(M|s_t,a_t), &\text{when } s_{t+1}=M(s_t, a_t)\\
    0, &\text{otherwise}
  \end{cases}
\end{equation}
and
\begin{equation}
\label{eq-st1-noM}
P(s_{t+1} | s_t, a_t) = \int_{\mathcal{M}} P(M|s_t, a_t) P(s_{t+1} | s_t, a_t, M) \partial M.
\end{equation}
The environment is assumed to be stochastic, but we consider the distributions in Equations \ref{eq-st1-withM} and \ref{eq-st1-noM} are subjective agent-based beliefs about the environment, and not the underlying objective truth. Furthermore, these distributions are non-stationary and change as the agent learns more about the environment. Also note that we equate states with observations even in complicated scenarios when the true state of the environment is not directly observed (e.g. the agent has a camera). This is because the goal of the agent in RL is to maximise long-term rewards, and we assume that those rewards are directly observable from the environment.

The distribution in Equation \ref{eq-st1-noM} allows for a number of definitions of surprise, some of which have previously appeared in the literature. In all cases, the accuracy of the distribution, after observing the next state, is used to derive surprise, $U(s_{t+1})$, and forms the basis for intrinsic reward: $r_i(s_t) = \eta U(s_t)$, where $0<\eta \le 1$, and is used to encourage exploration of \textit{unexpected} states \cite{houthooft2016vime, achiam2017surprise,  pathak2017curiosity, mohamed2015variational, schmidhuber1991possibility, chentanez2005intrinsically}. The agent is rewarded for curiosity of the \textit{unknown} as gauged by its model of the environment.  Throughout training, the model is improved to be more accurate (and so less surprised) next time it encounters an already explored state.  The hope is that this methodical approach to exploration will result in a speedier arrival of the agent at the states with non-zero extrinsic reward.  

It is not entirely obvious how best to define the surprise.  One proposed definition is the so-called \textit{surprisal} \cite{tribus1961thermostatics}, which is the negative log-likelihood (NLL) of the next state in RL tasks
\begin{equation}
U_{\text{NLL}}(s^*_{t+1}) = -\log{P(s_{t+1}=s^*_{t+1}|s_t,a_t)},
\end{equation}
where $s^*_{t+1}$ is the observed state at time $t+1$. This notation is a bit clumsy, so we shorten it to:
\begin{equation}\label{eqn_surprisal}
U_{\text{NLL}}({s}_{t+1}) = -\log{P(s_{t+1}|s_t,a_t)},
\end{equation}
which we think is clear and more concise.

Surprisal is intuitive, simple and easy to compute, but it does not capture all the information available, because it only measures the surprise at a single point. An alternative is \textit{Bayesian surprise}, which measures the difference between the prior distribution over the model space at time $t$, and the posterior distribution updated by Bayes' rule with the newly observed state (first used in RL by \citet{storck1995reinforcement}):
\begin{eqnarray}\label{eqn:theta_bayes}
  P(M | s_t, a_t, s_{t+1}) = \frac{P(M | s_t, a_t) P(s_{t+1}|s_t,a_t,M)}{P(s_{t+1}|s_t, a_t)}
\end{eqnarray}
Bayesian surprise is defined as the Kullback-Leiber (KL) divergence between the prior and the posterior beliefs about the dynamics of the environment  \cite{itti2005principled, itti2006bayesian}:
\begin{eqnarray}\label{sur_bayes}
U_{\text{Bayes}}(s_{t+1}) = D_{KL}[P(M|s_t,a_t)||P(M|s_t,a_t,s_{t+1})].
\end{eqnarray}
Bayesian surprise measures the difference between subjective beliefs prior and post an event.  One problem with Bayesian surprise is that the agent does not express surprise until it updates its belief, which is inconsistent with the instantaneous response to surprise displayed by neural data \cite{faraji2016learning}. 

\citet{faraji2016balancing} introduced a modification of Bayesian surprise referred to as the \textit{confidence-corrected surprise} (CC), which measures the difference between the agent's current beliefs about the world, and a naive observer who believes all models are equally likely:
\begin{eqnarray}\label{sur_cc}
U_{\text{CC}}(s_{t+1}) &=& D_{KL}[P(M|s_t,a_t)||P^{\text{flat}}(M|s_{t+1})],
\end{eqnarray}

They also designed a surprise minimization rule to let the agents adapt quickly to the environment, especially in a highly volatile environment. But confidence-corrected surprise was not applied to RL frameworks. Nevertheless, their definition of surprise is similar in spirit to ours.

To use Bayesian surprise for RL tasks, \citet{houthooft2016vime} proposed a surprise-driven exploration strategy called VIME (variational information maximizing exploration). They showed that VIME achieves significantly better performance compared to heuristic exploration methods across a variety of continuous control tasks with sparse rewards. However, to compute each reward, VIME needs to calculate the gradient through a Bayesian neural network (BNN)\cite{graves2011practical, blundell2015weight}, which is used to implement the agent's model. This requires a forward and a backward pass, which leads to slow training speed. \citet{achiam2017surprise} chose the surprisal of new observation as their intrinsic surprise reward. Their experimental results showed that surprisal does not perform as well as VIME, but it runs faster. They also showed that the surprisal includes $L_2$-squared model prediction error, which was first proposed in \cite{stadie2015incentivizing} and later used as the curiosity in  \cite{pathak2017curiosity}. \citet{burda2018large} also chose surprisal-based strategy exploration in large scale RL environments that provide no extrinsic rewards. 

\section{\ainvase{} Surprise for Exploration}

In this paper, we focus on RL environments with continuous control and very sparse extrinsic rewards.  To conquer the numerous shortcomings of existing definitions for surprise discussed in Section \ref{sec-surprise}, and inspired by the idea of confidence-corrected surprise \cite{faraji2016balancing}, we note that in RL the definition of surprise should have the following characteristics:

\begin{enumerate}[wide, labelwidth=!, labelindent=0pt]
  \item subjectivity -- the agent should hold subjective beliefs about the environment captured through $P(M)$; the surprise depends on an agent's belief and this belief can be updated during learning;
  
  \item consistency -- based on its belief, the agent should be more surprised by states with lower likelihood;

  \item instancy -- the agent should be surprised immediately when it observes a new state from the environment, without the need to update its belief first.
\end{enumerate}

In order to address all of the above characteristics, we propose a new definition of surprise, which we refer to as \textit{\our{} surprise for exploration} (ASE): 
\begin{align}\label{eqn_sours}
\ourU(s_{t+1}) = & \mathbb{E}_{M \sim P(\cdot|s_t,a_t)}[-\log{P(s_{t+1}|s_t,a_t,M)}]\nonumber \\ & -\delta H\big (P(M|s_t,a_t)\big ),
\end{align} 
where $\delta$ is a trade-off coefficient. The first term, $\mathbb{E}_{M \sim P(\cdot|s_t,a_t)}[-\log{P(s_{t+1}|s_t,a_t,M)}]$, we call the \our{} surprise term, and the second term, $H\big (P(M|s_t,a_t)\big )$, the confidence term.
 
We will next demonstrate that the \our{} surprise satisfies all three characteristics we mentioned above. Subjectivity comes from the \our{} surprise term in Equation \ref{eqn_sours} because it is an expectation over the agent's belief in the veracity of each of the models. It is also the sum of the Bayesian surprise and the surprisal (hence the name \textit{\our{}}) as shown in the following lemma.
 
\begin{lemma}[Assorted surprise is the sum of Bayesian surprise and surprisal]
  \begin{align*}
\mathbb{E}_{M \sim P(\cdot|s_t,a_t)}[-\log{P(s_{t+1}|s_t,a_t,M)}] =\\ U_{\text{Bayes}}(s_{t+1})+U_{\text{NLL}}(s_{t+1})
  \end{align*}
\end{lemma}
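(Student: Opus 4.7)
The plan is to start from the definition of Bayesian surprise as a KL divergence, expand it using the Bayes' rule expression for the posterior $P(M\mid s_t,a_t,s_{t+1})$ given in Equation~\ref{eqn:theta_bayes}, and then recognise the surviving terms as $U_{\text{NLL}}$ and the assorted-surprise term. Concretely, I would write
\begin{align*}
U_{\text{Bayes}}(s_{t+1}) &= \mathbb{E}_{M \sim P(\cdot|s_t,a_t)}\!\left[\log\frac{P(M\mid s_t,a_t)}{P(M\mid s_t,a_t,s_{t+1})}\right].
\end{align*}

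Next I would substitute the Bayes'-rule identity from Equation~\ref{eqn:theta_bayes}, which gives $P(M\mid s_t,a_t)/P(M\mid s_t,a_t,s_{t+1}) = P(s_{t+1}\mid s_t,a_t)/P(s_{t+1}\mid s_t,a_t,M)$. Taking the log splits the expectation into two pieces: an $M$-independent term $\log P(s_{t+1}\mid s_t,a_t) = -U_{\text{NLL}}(s_{t+1})$, which pulls out of the expectation, and a residual $-\mathbb{E}_{M \sim P(\cdot|s_t,a_t)}[\log P(s_{t+1}\mid s_t,a_t,M)]$, which is exactly the assorted-surprise term on the left-hand side of the claim. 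Rearranging yields the desired identity.

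There is essentially no obstacle: the proof is a one-line manipulation of the KL divergence after substituting Bayes' rule, and the only subtlety is making sure that $P(s_{t+1}\mid s_t,a_t)$ is genuinely independent of $M$ so it can be pulled outside the expectation (which is immediate from its marginalised definition in Equation~\ref{eq-st1-noM}). I would present the chain of equalities in a single \texttt{align*} block and close by identifying each piece with its named surprise quantity.
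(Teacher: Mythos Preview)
Your proposal is correct and is essentially the same argument as the paper's: both rely on the single Bayes'-rule substitution from Equation~\ref{eqn:theta_bayes} inside the log, followed by pulling the $M$-independent term $\log P(s_{t+1}\mid s_t,a_t)$ out of the expectation. The only cosmetic difference is the starting point---you begin from $U_{\text{Bayes}}$ and rearrange, whereas the paper begins from the assorted-surprise expectation and expands---but the chain of equalities is identical.
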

%\haitaoNote{??}
\begin{proof}
  \begin{eqnarray*}\label{suprise-sum}
 &&\mathbb{E}_{M \sim P(\cdot|s_t,a_t)}[-\log{P(s_{t+1}|s_t,a_t,M)}] \\ 
 & = & -\int_{\mathcal{M}}P(M|s_t,a_t)\log{P(s_{t+1}|s_t,a_t,M)}\partial M\\
& = & -\int_{\mathcal{M}}P(M|s_t,a_t) \log{
    \frac{P(M|s_t,a_t,s_{t+1})P(s_{t+1}|s_t,a_t)}{P(M|s_t,a_t)}}\partial M\\
& = & \int_{\mathcal{M}}P(M|s_t,a_t) \log{
    \frac{P(M|s_t,a_t)}{P(M|s_t,a_t,s_{t+1})}}\partial M \\
    &&-
    \int_{\mathcal{M}}P(M|s_t,a_t)\log{P(s_{t+1}|s_t,a_t)}\partial M \\
    & = & D_{KL}[P(M|s_t,a_t)||P(M|s_t,a_t,s_{t+1})] \\
    && - \log{P(s_{t+1}|s_t,a_t)}] \\
    & = & U_{\text{Bayes}}(s_{t+1})\\
    && + U_{\text{NLL}}(s_{t+1}),
\end{eqnarray*} 
\end{proof}

This means that \our{} surprise is subjective due to the contribution of $U_{\text{Bayes}}$.  However, the expectation in Eq. \ref{eqn_sours} does not require evaluation of $P(M|s_t,a_t,s_{t+1})$, so there is no requirement to update the agent's belief in order to compute \our{} surprise thus satisfying the instancy characteristic.  Finally, a less likely state leads to a larger surprise through the negative log likelihood of $P(s_{t+1}|s_t,a_t)$ in the surprisal term. This satisfies the consistency requirement.

The confidence term of Eq. \ref{eqn_sours}, $H(P(M|s_t,a_t))$, is the entropy of $P(M|s_t,a_t)$.  This term was added for confidence correction of \our{} surprise. A confident agent will have a low entropy, and therefore any surprising event according to the \our{} surprise term will remain surprising. Whereas an uncertain agent will have a large entropy and their overall surprise will be reduced because they would be equally surprised by many events.  That is, confident agents are more surprised when their beliefs are violated by unlikely events than uncertain agents.

\expandafter\MakeUppercase \our{} surprise captures the positive elements of both Bayesian surprise and confidence-corrected surprise. It implicitly computes the difference in belief as in Bayesian surprise without needing to update the belief first, and it can be computed very fast as in confidence-corrected surprise without needing to maintain the idea of a naive observer.

\subsection{Variational \ainvase{} Surprise for Exploration (VASE)}
\label{sec-vase-impl}

\begin{figure}
    \centering
    \includegraphics[width=0.6\linewidth]{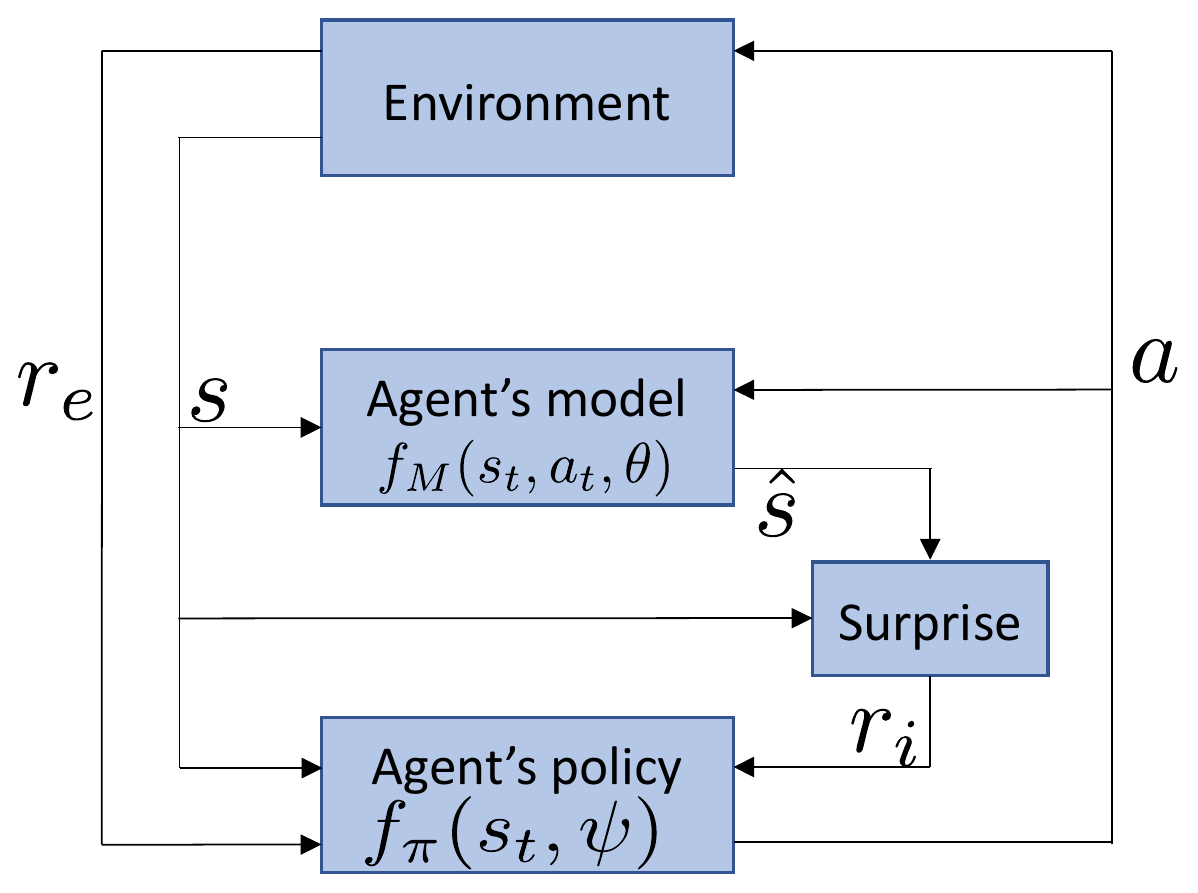}
    \caption{VASE-driven RL with $s$ for state, $r_e$ for the extrinsic (environment-driven) reward, $a$ for action, $r_i$ for intrinsic (agent-driven) reward, $a_t=f_\pi(s_t,\psi)$ the policy governed by set of parameters $\psi$, and $\widehat{s}=f_m(s_t,a_t,\theta)$ the sample BNN model of the environment.}
    \label{fig:rl_detail}
\end{figure}

Based on the discussion in Section \ref{sec-surprise}, we construct a BNN dynamics model $f_m(s_t,a_t,\Theta)$, where $\Theta$ is a random variable describing the parameters of the model (see Figure \ref{fig:rl_detail}).  BNN can be seen as a distribution of models $M$, where a sample of network parameters $\theta$ according to distribution $P(\theta)$ is analogous to generating a single prediction of the next state according to $P(M)$.  The prior distribution $P(\theta)$ changes to posterior $P(\theta|\mathcal{D})$ when BNN is trained by $\mathcal{D} = \{s_t, a_t, s_{t+1}\}$.

Since the posterior  $P(\theta|s_t,a_t)$ in Eq. \ref{eqn_sours} is intractable, we turn to variational inference \cite{bishop2006pattern} to approximate it  with a fully factorised Gaussian distribution \cite{graves2011practical, blundell2015weight, hinton1993keeping}
\begin{eqnarray}\label{eq:full_gauss}
q(\theta;\phi) = \prod_{i=1}^{|\Theta|}\mathcal{N}(\theta_i;\mu_i,\sigma_i^2),
\end{eqnarray}
where $\theta_i$ is the $i^{\text{th}}$ component of $\theta$, and $\phi_i = (\mu_i,\sigma_i)$.

 The use of $q(\theta;\phi)$ in place of $P(\theta|s_t,a_t)$ changes the definition of surprise from Eq. \ref{eqn_sours} to one we call variational \our{} surprise for exploration (VASE): 
\begin{eqnarray}\label{eq:app_sur}
U_{\text{VASE}}(s_{t+1}) & = \mathbb{E}_{\theta \sim q(\cdot;\phi)}[-\log{P(s_{t+1}|s_t,a_t,\theta)}]\nonumber\\
& -\delta H(q(\theta;\phi)).
\end{eqnarray}

Since the output of the model for sample $\theta$ gives the prediction of the next state $\widehat{s}_{t+1}=f_m(s_t,a_t,\theta)$, we define $P(s_{t+1}|s_t,a_t,\theta)$ by measuring the deviation of $\widehat{s}_{t+1}$ from $s_{t+1}$ under the assumption that states are normally distributed:
\begin{equation}\label{eq:app_NLL}
P(s_{t+1}|s_t,a_t,\theta)=\frac{1}{\sqrt{2\pi\sigma_c^2}}e^{-||\widehat{s}_{t+1}-s_{t+1}||^2/(2\sigma_c^2)},
\end{equation}
where  $\sigma_c$ is an arbitrarily chosen constant, $||\widehat{s}_{t+1}-s_{t+1}||$ is the norm of the difference vector between the prediction of the next state and the true next state. Note that this is a slightly different formulation to that in Equation \ref{eq-st1-withM} but approaches that formulation as $\sigma_c$ approaches 0.

N samples of $\theta \sim q(\cdot;\phi)$ give $N$ predictions for the next state from the BNN, which allows us to estimate the first term of Eq. \ref{eq:app_sur} with the average:
\begin{eqnarray*}\label{eqn:exp_lltheta}
\mathbb{E}_{\theta \sim q(\cdot;\phi)}[\log{P(s_{t+1}|s_t,a_t,\theta)}]\\\approx \frac{1}{N}\sum_{n=1}^N\log P(s_{t+1}|s_t,a_t,\theta^{[n]}),
\end{eqnarray*}
where $\theta^{[n]}$ is the $n^{\text{th}}$ sample of $\Theta$ drawn from $q(\theta;\phi)$ and $P(s_{t+1}|s_t,a_t,\theta^{[n]})$ is evaluated according to Eq. \ref{eq:app_NLL}.

Since $q(\theta;\phi)$ is a fully factorised Gaussian distribution, the second term of Eq. \ref{eq:app_sur} is straight forward to evalute:
\begin{eqnarray*}
H\big (q(\theta;\phi)\big )  &=& \sum_{i=1}^{|\Theta|} H(\mathcal{N}(\theta_i;\mu_i,\sigma_i^2)) \\
&=& \frac{1}{2}\sum_{i=1}^{|\Theta|}(\log(2\pi e \sigma_i^2).
\end{eqnarray*}

The last thing remaining is to ensure $q(\theta;\phi)$ is as close as possible to $P(\theta|\mathcal{D})$.  
Variational inference uses Kullback-Leibler (KL) divergence for measuring how different $q(\theta;\phi)$ is from $P(\theta|\mathcal{D})$:
\begin{eqnarray*}
 D_{KL} [q(\theta; \phi)|| P(\theta|\mathcal{D})]
       & = & \int_{\mathcal{\theta}}q(\theta; \phi)\log{\frac{q(\theta; \phi)}{P(\theta|\mathcal{D})}}\partial \theta\nonumber\\
       & = & D_{KL} [q(\theta; \phi)|| P(\theta)]\\
       && -\mathbb{E}_{\theta \sim q(\cdot;\phi)}[\log{P(\mathcal{D}|\theta)}] + \log{P(\mathcal{D})}.
       %& = & - L[q(M; \phi), \mathcal{D}] + \log{P(\mathcal{D})},
\end{eqnarray*}
 %We can see that $L[q(M; \phi), \mathcal{D}] = \log{P(\mathcal{D})} -D_{KL} [q(M; \phi)|| P(M|\mathcal{D})]$. 

This difference is minimised by changing $\phi$, which is equivalent to maximising the variational lower bound \cite{bishop2006pattern}:
\begin{eqnarray}\label{eq:elbo}
L[q(\theta; \phi), \mathcal{D}] = \mathbb{E}_{\theta \sim q(\cdot;\phi)}[\log{P(\mathcal{D}|\theta)}]-D_{KL} [q(\theta; \phi)|| P(\theta)],
\end{eqnarray}
which does not require evaluation of $P(\theta|\mathcal{D})$.  In this paper, the prior distribution of $\theta$ is taken to be 
\begin{equation}\label{eq:prior_theta_gauss}
P(\theta)=\prod_{i=1}^{|\Theta|}\mathcal{N}(\theta_i;0,\sigma_m^2), 
\end{equation}
where $\sigma_m$ is set to arbitrary value, and the expectation of log likelihood of $P(\theta|\mathcal{D})$ is evaluated as in Eq. \ref{eqn:exp_lltheta}.

%\begin{eqnarray}
%r(s_t , a_t , s_{t+1}) = r_e(s_t, a_t) + \eta U_{\text{VASE}}.
%\end{eqnarray}
%The RL algorithm will choose this reward to train the agent. If there is no extrinsic reward, the agent will be trained directly by $U_{\text{VASE}}$.

 The entire training procedure is listed in Algorithm \ref{alg::VASE}. 
 
 %In this paper, we choose Trust Region Policy Optimization (TRPO) \cite{schulman2015trust} as our RL policy update algorithm in all of our experiments. However, it should be noted that our surprise-driven method can be embedded into any other RL algorithms. 

\begin{algorithm}
\DontPrintSemicolon
\SetAlgoLined
\SetKwInOut{Input}{input}\SetKwInOut{Output}{output}
{Initialise policy neural network $f_{\pi}$ with parameters $\psi$ \;
Initialise agent's BNN model $f_m$:\;
\hspace*{0.5cm} Initialise $q(\theta;\phi)$ with parameters $\phi$\;
 \hspace*{0.5cm} Initialise prior distribution $P(\theta)$\; 
 %Initial state distribution $\rho_0$\;}
 Initialise experience buffer $R$.\;
 Reset the environment getting $(s_0,r_e(s_0))$.\;}
\For{each iteration $n$}
{
    \For{each time step $t$}
    {
    	Get action $a_t \sim f_{\pi}(s_t,\psi)$\;
    	Sample $\theta$ $N$ times according to $q(\theta;\phi)$\;
    	Evaluate $N$ predictions $\widehat{s}_{t+1}=f_m(s_t,a_t,\theta)$\;
    	Take action $a_t$ getting $(s_{t+1},r_e(s_{t+1}))$\;
    	%Get $r_e(s_t , a_t)$ from environment\;
    	%\If{not enough samples in $R$}
		%{
		%Add new $(s_t, a_t, s_{t+1})$ triplet to $R$\;
		%}
		Compute intrinsic surprise $U_{\text{VASE}}(s_{t+1})$ \;
		Construct cumulative reward $r(s_{t+1}) = r_e(s_{t+1}) + \eta U_{\text{VASE}}(s_{t+1})$\;
		Add new $(s_t,a_t,s_{t+1},r(s_{t+1}))$ to $R$\;
	}
	Update $f_m$ by maximising Eq. (\ref{eq:elbo}), with $\mathcal{D}$ sampled randomly from $R$\;
	Update $f_{\pi}$ using TRPO.
}
\caption{Variational \ainvase{} Surprise Exploration (VASE)}\label{alg::VASE}
\end{algorithm}

\section{Experiments and results analysis}

\subsection{Visualising exploration efficiency}

\begin{figure*}
    \centering
    \subfigure[TRPO (2,059,459 steps)]
    {
        \includegraphics[width=0.45\linewidth]{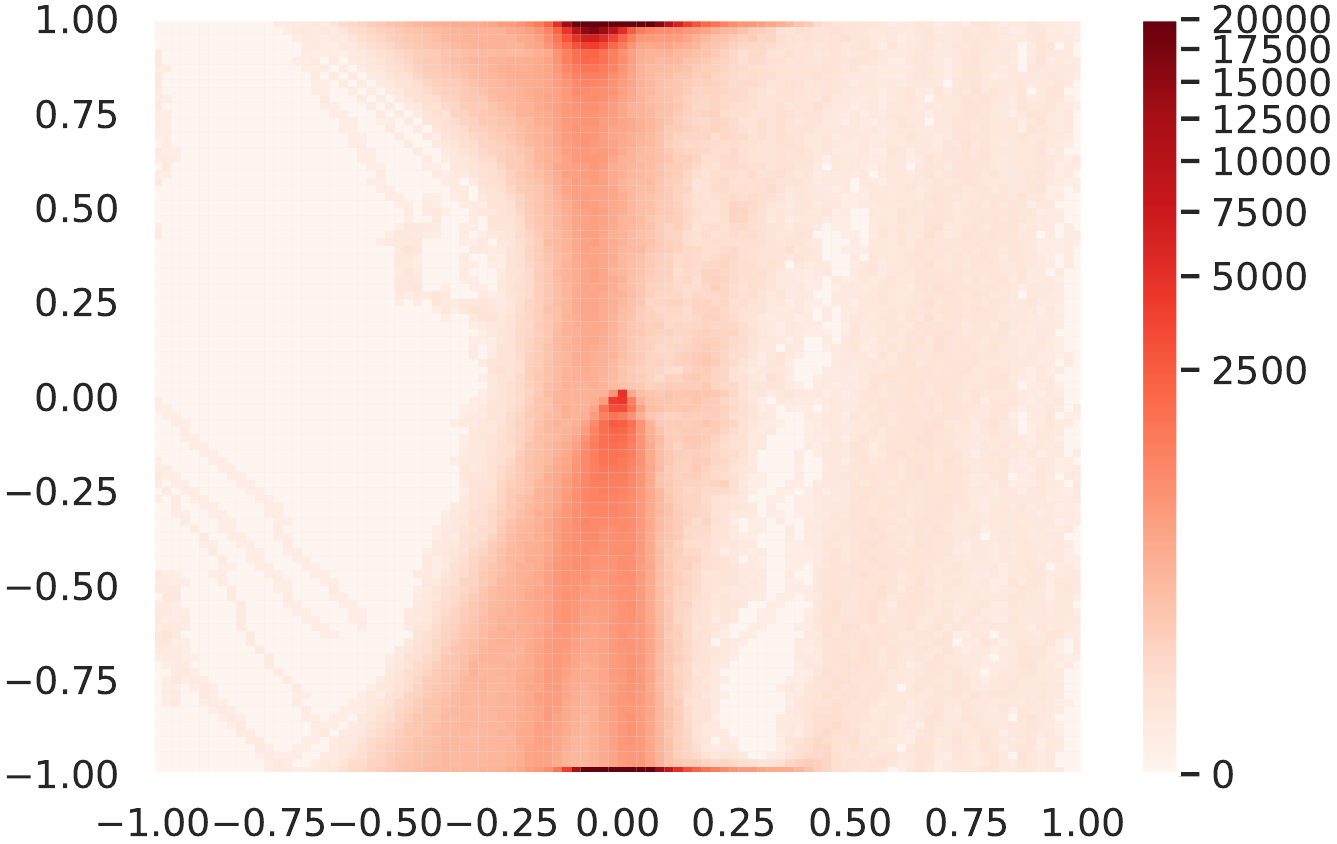}
        \label{fig:exp_strategy_nosup}
    }
    \subfigure[TRPO+VASE (26,663 steps)]
    {
        \includegraphics[width=0.45\linewidth]{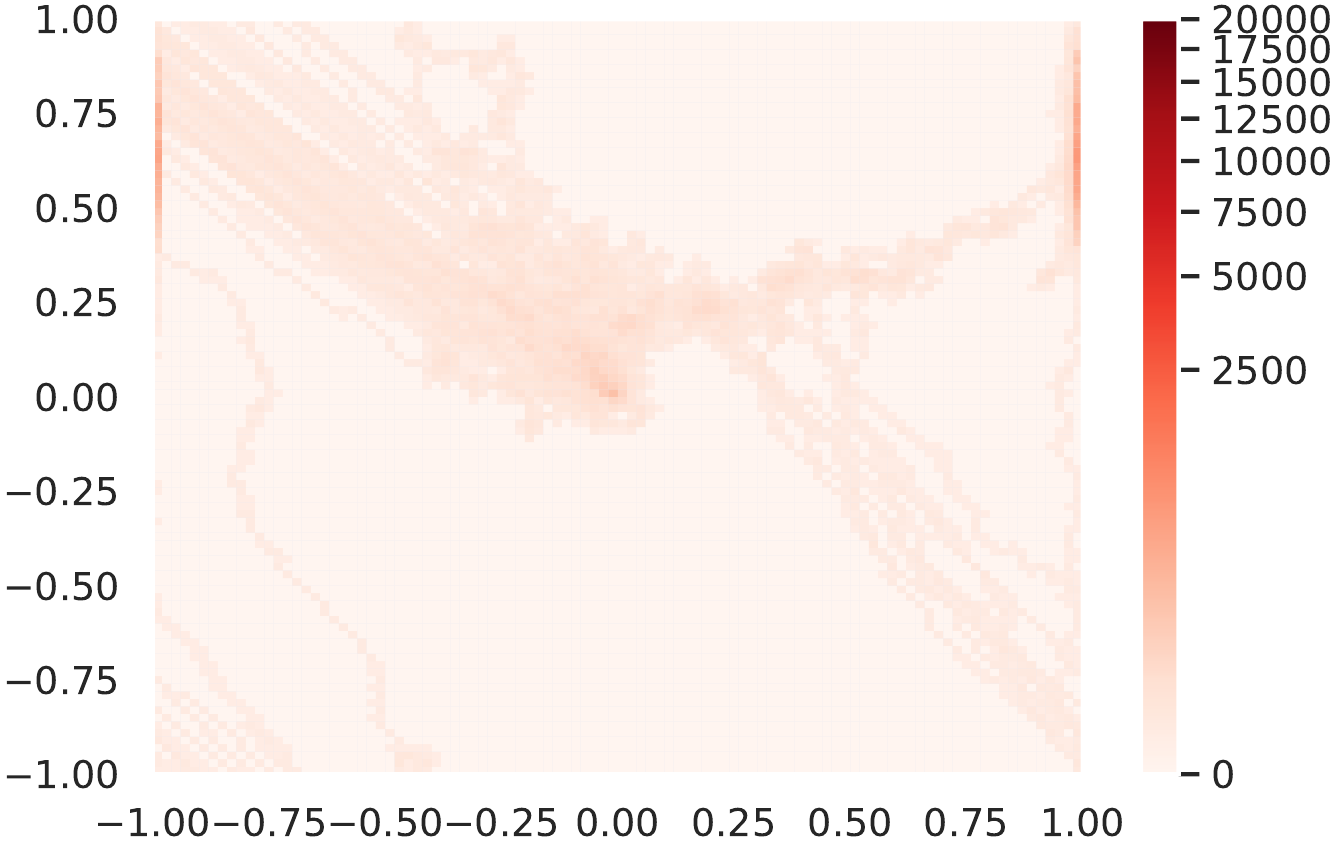}
        \label{fig:exp_strategy_sup}
    }
    \caption{Exploration efficiency as a heatmap showing the number of states visited during training in 2DPointRobot environment until chancing upon the reward state with a) no surprise, b) VASE.}
    \label{fig:exp_strategies}
\end{figure*}

For illustrative purposes, we begin the experimental evaluation of VASE by testing it on a simple 2DPlane environment ($\mathcal{S}\subset \mathbb{R}^2, \mathcal{A}\subset \mathbb{R}^2$) which lends itself to a visualisation of the agent's exploration efficiency.
% of a 80 x80 discrete states and 4 possible actions for moving north, east, south, or west to an adjacent state ($\mathcal{S}\subset \in \{(x,y): x=\pm n/40, y=\pm n/40, \forall n=-40,39,...,39,40\}, \mathcal{A}\subset \{N,E,S,W\}$).   
The observation space is a square on the 2D plane $((x, y) \in \mathbb{R}^2)$, centred on the origin. The action is its velocity $(\dot x, \dot y)$ that satisfies $|\dot x| \leq 0.01, |\dot y| \leq 0.01$. 
In this environment the agent starts at origin (0,0) and the only extrinsic reward can be found at location (1,1).  The environment wraps around, so that there are no boundaries.

In this experiment, we train one agent and record the observation coordinate $(x,y)$ in each step until it finds the non zero extrinsic reward.  Figure \ref{fig:exp_strategies} shows the heat map of motion track for the agent trained without surprise and with VASE surprise. Darker red colour represents a higher density, which means the agent lingers more steps in this area. It is clear that random exploration strategy takes a long time (2,059,459 steps V.S. 26,663 steps) to find that first non-zero $r_e(s_t)$, whereas VASE does not spend time unnecessarily in random states.

\subsection{Continuous state/action environments}

\begin{figure*}
    \centering
    \subfigure[MountainCar]
    {
        \includegraphics[width=0.45\linewidth]{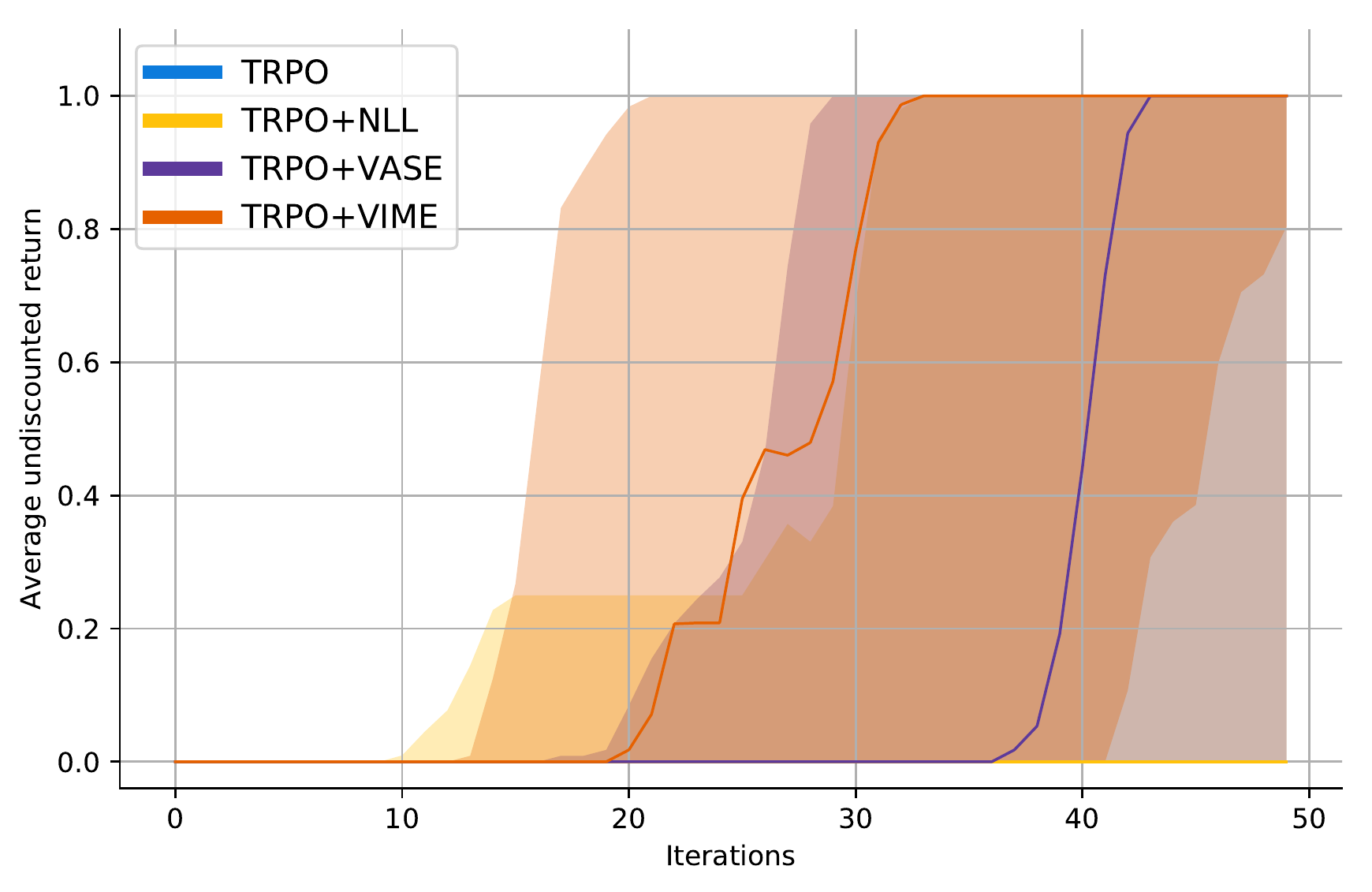}
        \label{fig:MountainCar-m}
    }
    \subfigure[CartpoleSwingup]
    {
        \includegraphics[width=0.45\linewidth]{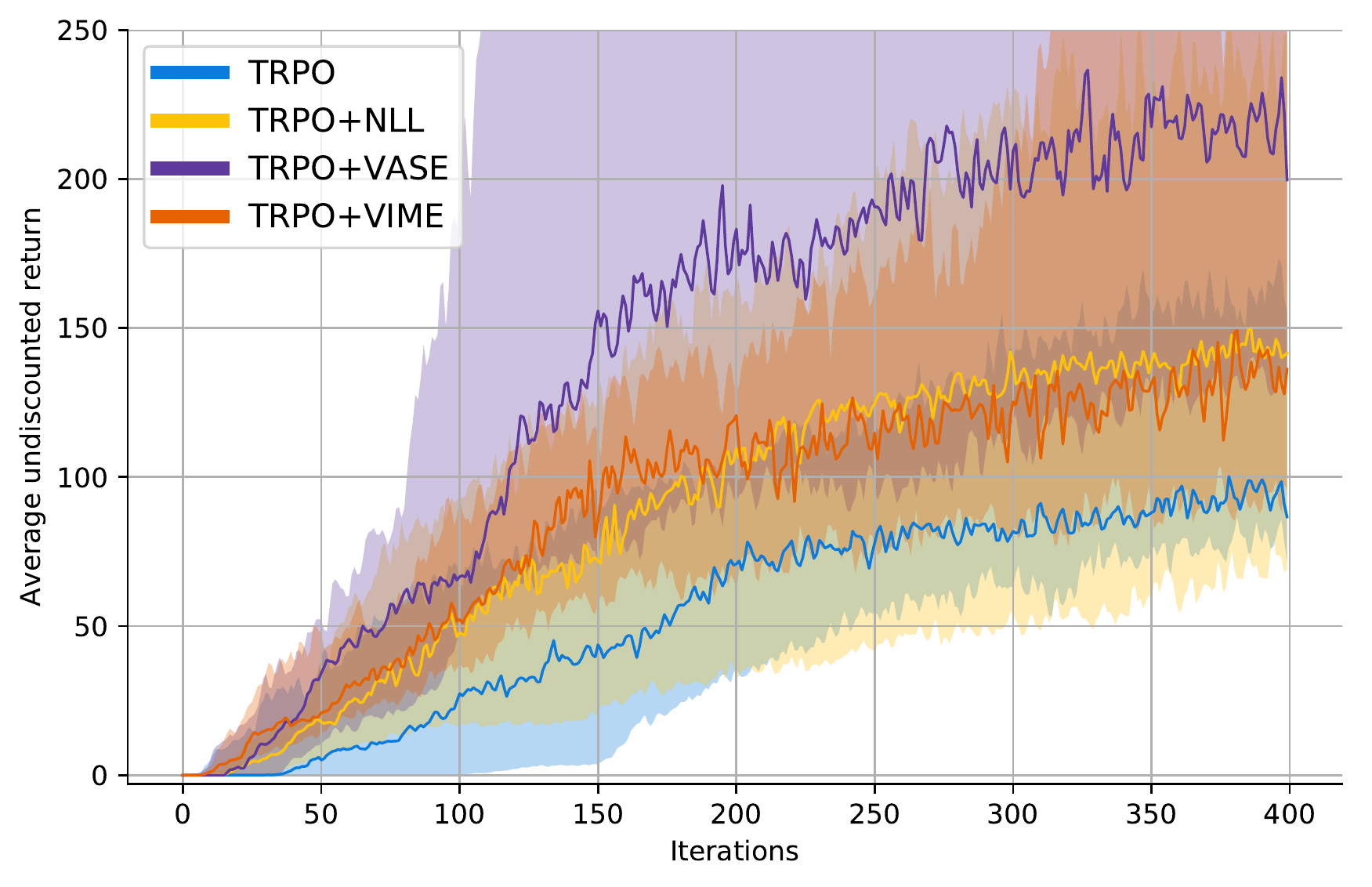}
        \label{fig:CartpoleSwingup-m}
    }
    \\
    \subfigure[DoublePendulum]
    {
        \includegraphics[width=0.45\linewidth]{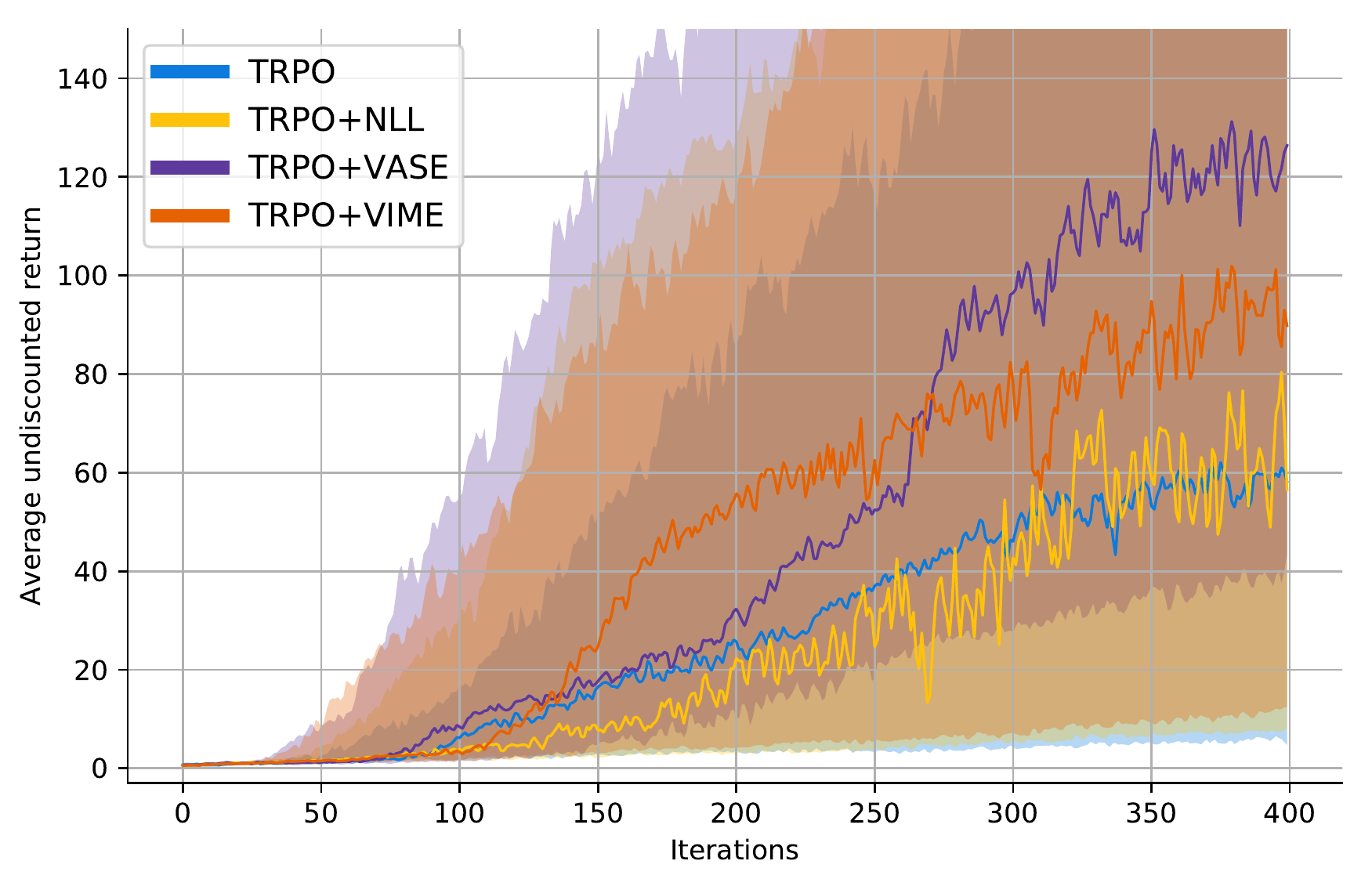}
        \label{fig:DoublePendulum-m}
    }
    \subfigure[HalfCheetah]
    {
        \includegraphics[width=0.45\linewidth]{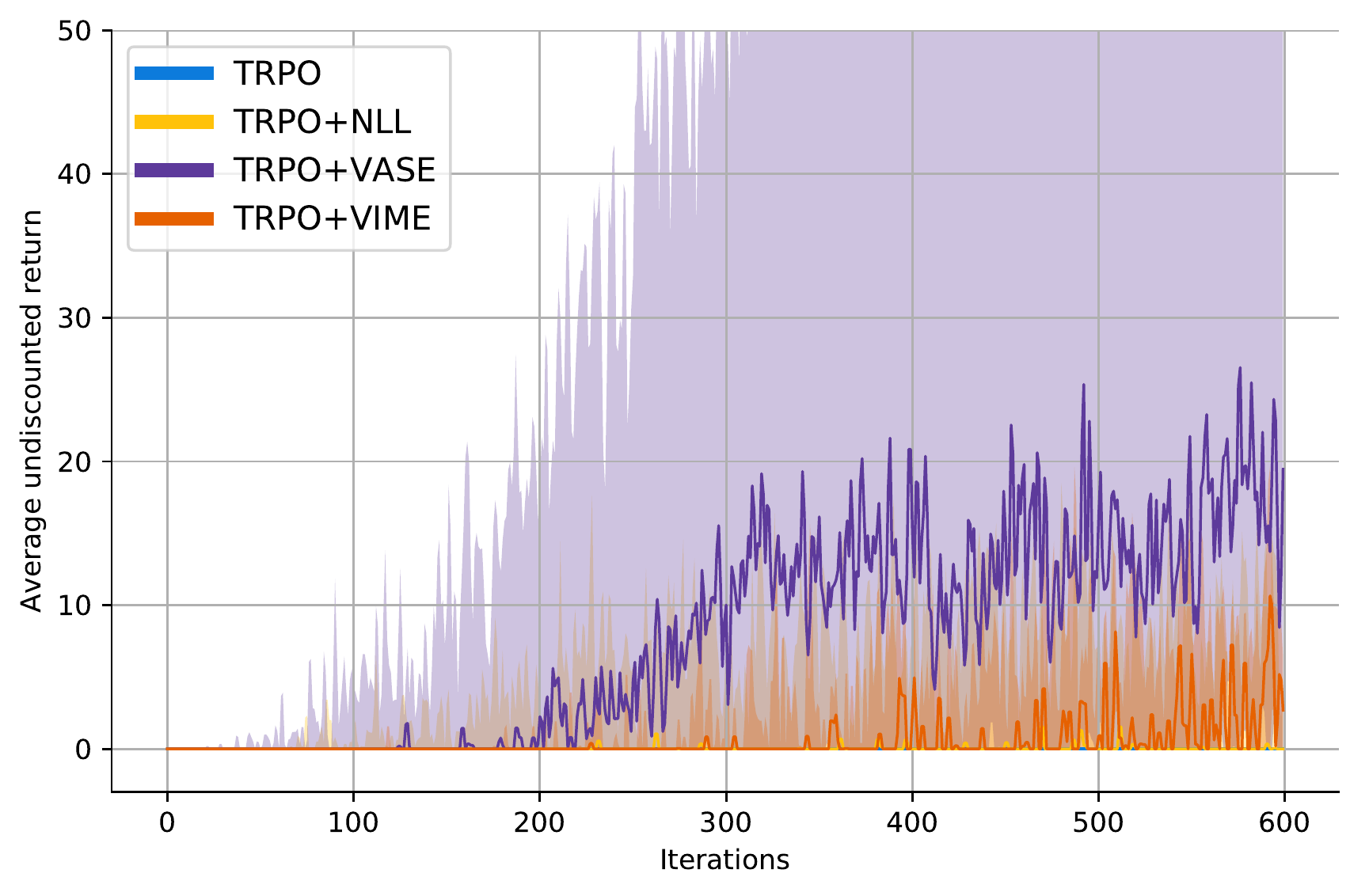}
        \label{fig:HalfCheetah-m}
    }
    \subfigure[Ant]
    {
        \includegraphics[width=0.45\linewidth]{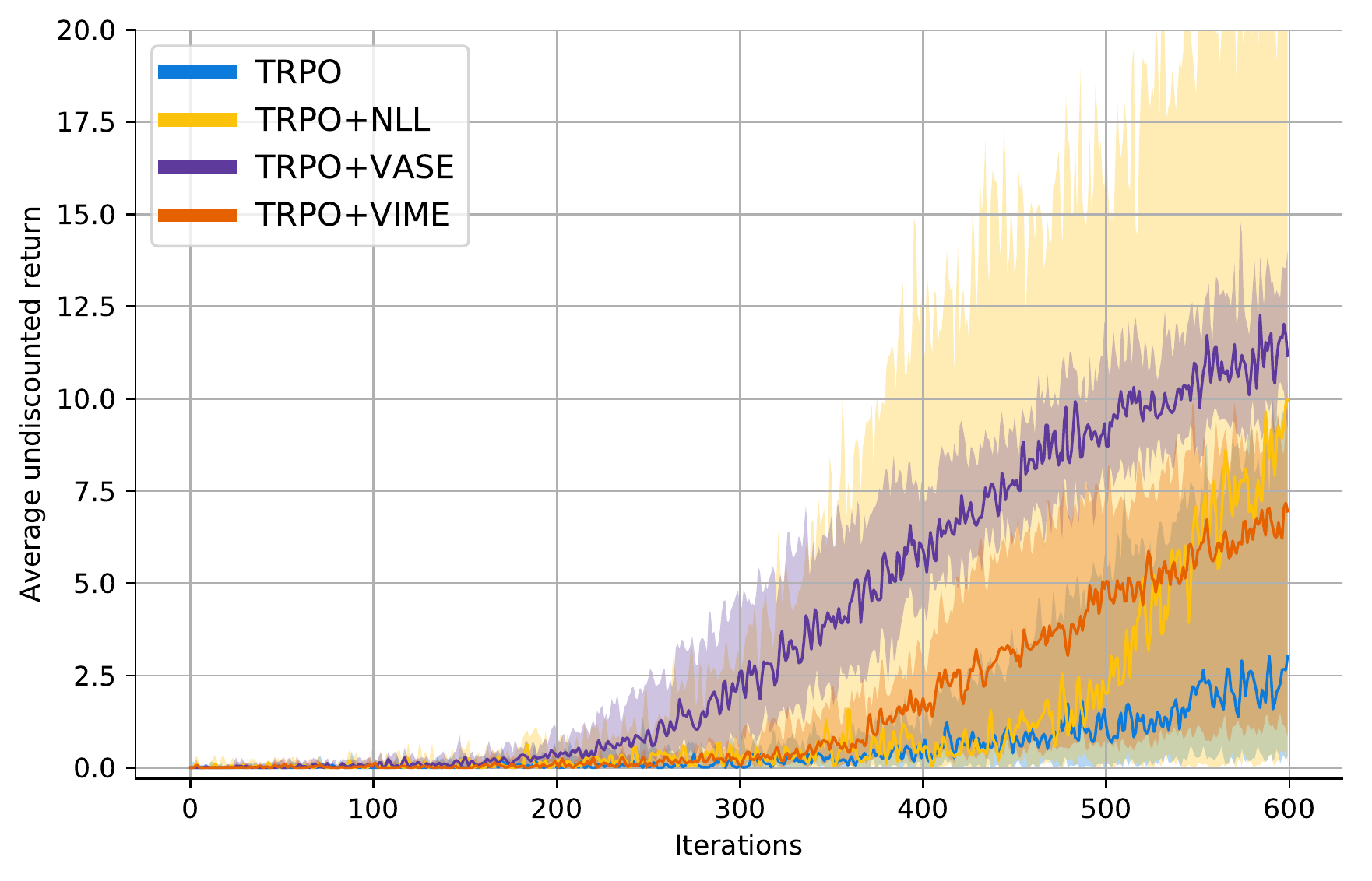}
        \label{fig:ant-m}
    }
    \subfigure[Lunar-lander]
    {
        \includegraphics[width=0.45\linewidth]{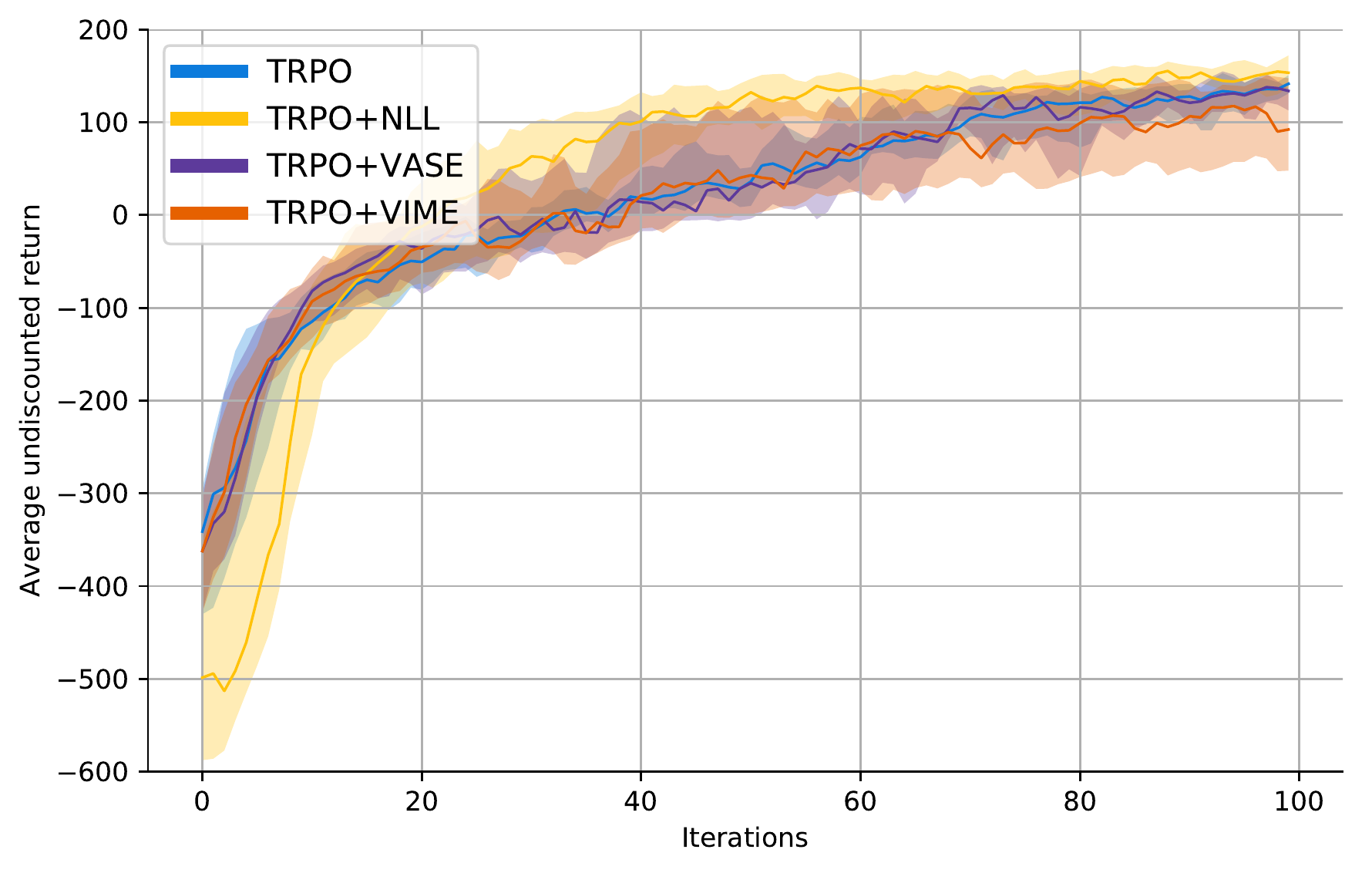}
        \label{fig:lunar-lander-m}
    }
    \caption{Median performance for the continuous control tasks over 20 runs with a fixed set of seeds,
with interquartile ranges shown in shaded areas. VIME, NLL and VASE use Bayesian surprise, surprisal, our surprise respectively.}
    \label{fig:Median}
\end{figure*}

Next we evaluate VASE on five continuous control benchmarks with very sparse reward, including three classic tasks: sparse MountainCar ($\mathcal{S}\subset \mathbb{R}^3, \mathcal{A}\subset \mathbb{R}^1$), sparse CartPoleSwingup ($\mathcal{S}\subset \mathbb{R}^4, \mathcal{A}\subset \mathbb{R}^1$), sparse Doublependulum ($\mathcal{S}\subset \mathbb{R}^6, \mathcal{A}\subset \mathbb{R}^1$) and two locomotion tasks: sparse HalfCheetah ($\mathcal{S}\subset \mathbb{R}^{20}, \mathcal{A}\subset \mathbb{R}^6$), sparse Ant ($\mathcal{S}\subset \mathbb{R}^{125}, \mathcal{A}\subset \mathbb{R}^8$). These tasks were introduced in \cite{houthooft2016vime}. We also evaluate VASE on LunarLanderContinuous ($\mathcal{S}\subset \mathbb{R}^{8}, \mathcal{A}\subset \mathbb{R}^2$) task.  

For the sparse MountainCar task, the car will climb a one-dimensional hill to reach the target. The target is located on top of a hill and on the right-hand side of the car. If the car reaches the target, the episode is done. The observation is given by the horizontal position and the horizontal velocity of the car. The agent car receives a reward of 1 only when it reaches the target. 

For the sparse CartpoleSwingup task, a pole is mounted on a cart. The cart itself is limited to linear motion. Continuous cart movement is required to keep the pole upright. The system should not only be able to balance the pole, but also be able to swing it to an upright position first. The observation includes the cart position $x$, pole angle $\beta$, the cart velocity $\dot x$, and the pole velocity $\dot \beta$. The action is the horizontal force applied to the cart. The agent receives a reward of 1 only when $cos(\beta) > 0.9$.  

For the sparse Doublependulum task, the goal is to stabilise a two-link pendulum at the upright position. The observation includes joint angles $(\beta_1$ and $\beta_2)$ and joint speeds $(\dot \beta_1$ and $\dot \beta_2)$.  The action is the same as in CartpoleSwingup task. The agent receives a reward of 1 only when dist $< 0.1$, with dist the distance between current pendulum tip position and target position. 

For the sparse HalfCheetah task, the half-cheetah is a flat biped robot with nine rigid links, including two legs and one torso, and six joints. 20-dimensional observations include joint angle, joint velocity, and centroid coordinates. The agent receives a reward of $1$ when $x_{body} \geq 5$. 

For the sparse Ant task, the ant has 13 rigid links, including four legs and a torso, along with 8 actuated joints.  The 125-dim observation includes joint angles, joint velocities, coordinates of the centre of mass, a (usually sparse) vector of contact forces, as well as the rotation matrix for the body. The ant receives a reward of $1$ when $x_{body} \geq 3$.

For Lunar-lander task, the agent tries to learn to fly and then land on its landing pad. The episode is done if the lander crashes or comes to rest. The agent should get rewards of $200$ when it solves this task. 

All the environments except MountainCar and 2DPointRobot tasks (they are two simple tasks, we do not need to normalise them) are normalised before the algorithm starts. Here normalise the task means normalise its observations, for each observation $o$:
$$o = \frac{(o - \mu_o)}{\sigma_o},$$ where $\mu_o$ and $\sigma_o$ are the mean and standard deviation of observations. All observations and actions in these environments are continuous values. To compare with \citep{achiam2017surprise} and \citep{houthooft2016vime}, we also use Trust Region Policy Optimization (TRPO) \cite{schulman2015trust} method as our base reinforcement learning algorithm throughout our experiments, and we use the rllab \cite{duan2016benchmarking} implementations of TRPO.

The number of samples drawn to compute our surprise is $N=10$. The prior distribution $P(\theta)$ is given by a Gaussian distribution from Eq. \ref{eq:prior_theta_gauss} with $\sigma_m=0.5$. $\sigma_c$ in Eq. \ref{eq:app_NLL} is set as 5. For the classic tasks sparse MountainCar, sparse CartPoleSwingup, sparse DoublePendulum and sparse LunarLanderContinuous, the $f_m$ has one hidden layer of 32 units. All hidden layers have rectified linear unit (ReLU) non-linearities. The replay pool $R$ has a fixed size of 100,000 samples, with a minimum size of 500 samples. For the locomotion tasks sparse HalfCheetah and sparse Ant, the $f_m$ has two hidden layers of 64 units each. All hidden layers have tanh non-linearities. The replay pool $R$ has a fixed size of 5,000,000 samples. The Adam learning rate of $f_m$ is set to 0.001. All output layers are set to linear. 
The batch size for the policy optimisation is set to 5,000. For $f_\pi$ the classic tasks use a neural network with one layer of 32 tanh units, while the locomotion task uses a two-layer neural network of 64 and 32 tanh units. For baseline, the classic tasks use a neural network with one layer of 32 ReLU units, while the locomotion task uses a linear function. The maximum length of trajectory LunarLanderContinuous 1000, for all the other tasks, it is 500. 

Figure \ref{fig:Median} (a)-(e) shows the median performance of three classic control tasks and two locomotion tasks. All these tasks are with sparse rewards.  Figure \ref{fig:Median} (f) shows the median performance of LunarLanderContinuous task. The agent can easily obtain rewards from this task. The performance is measured through the average return $\mathbb{E}_{\tau}[\sum_{t=0}^{T}r_e(s_t)]$, not including the intrinsic rewards.  The median performance curves with shaded interquartile ranges areas. Figure \ref{fig:MountainCar-time} shows the speed comparison on MountainCar task.

As can be seen from Figure \ref{fig:Median} (a)-(e), VIME performs best for sparse MountainCar task. For the sparse DoublePendulum, VIME performs well initially, but is later surpassed by VASE. VASE shows good results in sparse CartpoleSwingup, sparse HalfCheetah and sparse Ant tasks. We can also see that VASE always performs better than NLL (suprisal) in all tasks. Figure \ref{fig:Median} (f) shows that in LunarLanderContinuous task that has enough reward for the agent, all surprise-driven methods behave almost the same to the no-surprise method.

Figure \ref{fig:MountainCar-time} shows us the speed test results. For VIME, it needs to calculate a gradient through its BNN at each time step to compute the Bayesian surprise reward. This is really time consuming. However, for our VASE algorithm, it does not need to compute this gradient. Figure \ref{fig:MountainCar-time} shows that VASE runs much faster than VIME.

\begin{figure}
    \centering
	%\subfigure[Mean]
    {
        \includegraphics[width=0.85\linewidth]{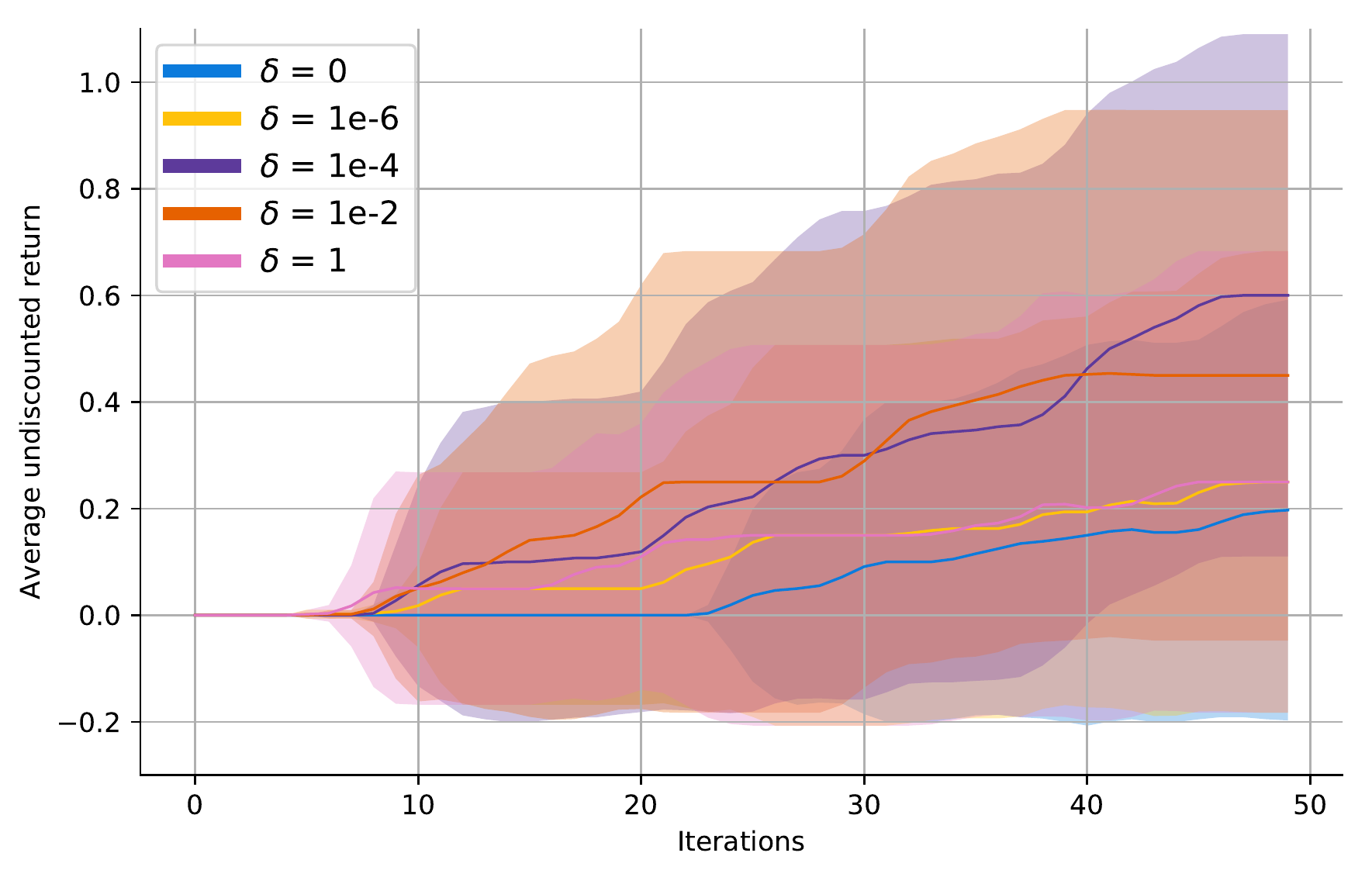}        
    } 
    \caption{Mean performance for MountainCar task with different $\delta$.}\label{fig:MountainCar-delta}
\end{figure}
\begin{figure}
    \centering
	%\subfigure[Mean]
    {
        \includegraphics[width=0.85\linewidth]{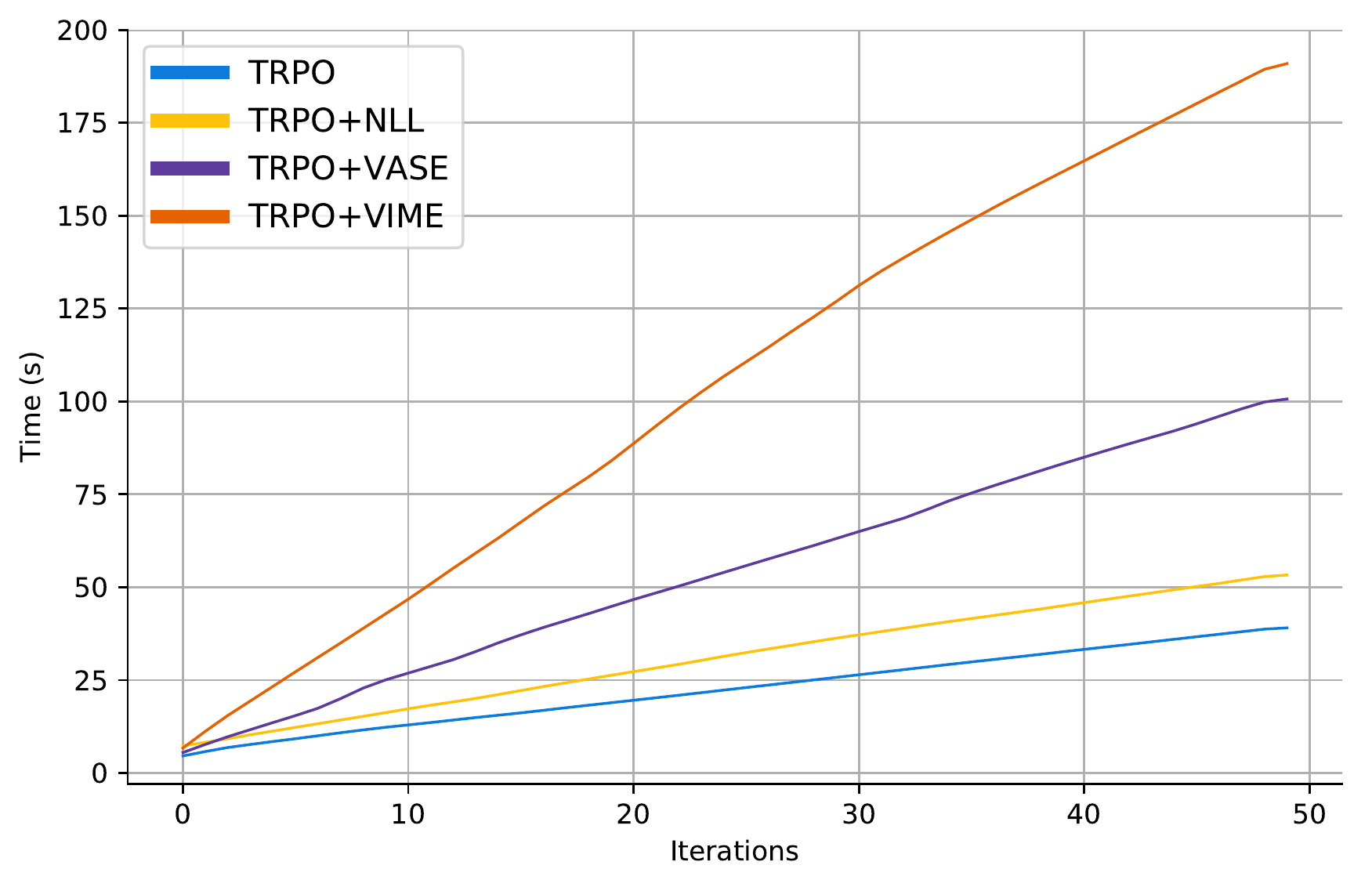}        
    } 
    \caption{Running time on MountainCar environment.}\label{fig:MountainCar-time}
\end{figure}

Finally, we also test how different values of trade-off $\delta$ that we used in Eq. \eqref{eq:app_sur} affects the performs of surprise $U_{\text{VASE}}$ on sparse MountainCar environment. We know that the value of $H(q(M;\phi))$ depends not only on the distribution of each parameter $M_i$, but also on the number of parameters $|M|$. Meanwhile, in the beginning stages of training, the entropy of each parameter $M_i$ is relatively large, therefore, we should take a relatively small value of $\delta$. Figure \ref{fig:MountainCar-delta} shows VASE performance based on $\delta$ chosen from \{0, 1e-6, 1e-4, 1e-2, 1\}. The performance is not good when $\delta$ is too big or too small. It shows that the best interval to search $\delta$ is [1e-4,1e-2]. 
%%---------------------------------------------------------------
\section{Conclusions}
In this work, we chose a new form of surprise as the agent's intrinsic motivation and applied it to the RL settings by our VASE algorithm.  VASE tries to approximate this surprise in a tractable way and train the agent to maximise its reward function. The agent is driven by this intrinsic reward, which can effectively explore the environment and find sparse extrinsic rewards given by the environment. Empirical results show that VASE performs well across various continuous control tasks with sparse rewards. We believe that VASE can be easily extended to deep reinforcement learning methods or learn directly from pixel features. We leave that to future work to explore.

\bibliographystyle{unsrtnat} 
\bibliography{VASE}

%\begin{appendices}
%\onecolumn
%%\appendix
%\section{Eq. (4)}
%\begin{eqnarray*}\label{suprise-sum}
% &&\mathbb{E}_{M \sim P(\cdot|s_t,a_t)}[-\log{P(s_{t+1}|s_t,a_t,M)}] \\ 
% & = & -\int_{M}P(M|s_t,a_t)\log{P(s_{t+1}|s_t,a_t,M)}dM\\
%& = & -\int_{M}P(M|s_t,a_t) \log{
%    \frac{P(M|s_t,a_t,s_{t+1})P(s_{t+1}|s_t,a_t)}{P(M|s_t,a_t)}}dM\\
%& = & \int_{M}P(M|s_t,a_t) \log{
%    \frac{P(M|s_t,a_t)}{P(M|s_t,a_t,s_{t+1})}}dM -
%    \int_{M}P(M|s_t,a_t)\log{P(s_{t+1}|s_t,a_t)}dM \\
%& = & D_{KL}[P(M|s_t,a_t)||P(M|s_t,a_t,s_{t+1})] + (- \log{P(s_{t+1}|s_t,a_t)}),
%\end{eqnarray*} 
%which is a sum of Bayesian surprise (belief dependent) and surprisal (belief independent) when a new state $s_{t+1}$ is oberseved.
%
%\section{Experiments}
%\end{appendices}
\end{document}